\newcommand{\E}{\mathbb{E}}
\newtheorem{prop}{Proposition}
\newtheorem{remark}{Remark}
\DeclarePairedDelimiter\abs{\lvert}{\rvert}
\DeclarePairedDelimiter\norm{\lVert}{\rVert}%
\let\oldabs\abs
\def\abs{\@ifstar{\oldabs}{\oldabs*}}
\let\oldnorm\norm
\def\norm{\@ifstar{\oldnorm}{\oldnorm*}}
\begin{document}
\sloppy

\begin{frontmatter}
\title{A Theoretical Connection Between \\ Statistical Physics and Reinforcement Learning}
\runtitle{A Theoretical Connection Between Statistical Physics and Reinforcement Learning}

\begin{aug}

\author{\fnms{Jad} \snm{Rahme}
 } 
\and %
\author{\fnms{Ryan P.} \snm{Adams}
} %
%

\address{Princeton University}

\footnotesize{\texttt{\{jrahme,rpa\}@princeton.edu}}

\runauthor{ J. Rahme  and R.P. Adams}

\end{aug}

\begin{abstract}
Sequential decision making in the presence of uncertainty and stochastic dynamics gives rise to distributions over state/action trajectories in reinforcement learning (RL) and optimal control problems.
This observation has led to a variety of connections between RL and inference in probabilistic graphical models (PGMs).
Here we explore a different dimension to this relationship, examining reinforcement learning using the tools and abstractions of statistical physics.
The central object in the statistical physics abstraction is the idea of a partition function~$\mcZ$, and here we construct a partition function from the ensemble of possible trajectories that an agent might take in a Markov decision process.
Although value functions and~$Q$-functions can be derived from this partition function and interpreted via average energies, the~$\mcZ$-function provides an object with its own Bellman equation that can form the basis of alternative dynamic programming approaches.
Moreover, when the MDP dynamics are deterministic, the Bellman equation for~$\mcZ$ is linear, allowing direct solutions that are unavailable for the nonlinear equations associated with traditional value functions. 
The policies learned via these~$\mcZ$-based Bellman updates are tightly linked to Boltzmann-like policy parameterizations. In addition to sampling actions proportionally to the exponential of the expected cumulative reward as Boltzmann policies would, these policies take {\it entropy} into account favoring states from which many outcomes are possible.  
\end{abstract}

\end{frontmatter}

\maketitle

\section{Introduction}

One of the central challenges in the pursuit of machine intelligence is robust sequential decision making.
In a stochastic and uncertain environment, an agent must capture information about the distribution over ways they may act and move through the state space.
Indeed, the algorithmic process of planning and learning itself can lead to a well-defined distribution over state/action trajectories.
This observation has led to a variety of connections between reinforcement learning (RL) and inference in probabilistic graphical models (PGMs) \citep{Levine18}.
In some ways this connection is unsurprising: belief propagation (and its relatives such as the sum-product algorithm) is understood to be an example of dynamic programming \citep{koller2009probabilistic} and dynamic programming was developed to solve control problems \citep{bellman1966dynamic, bertsekas1995dynamic}.
Nevertheless, the exploration of the connection between control and inference has yielded fruitful insights into sequential decision making algorithms \citep{kalman1960new,attias2003planning,
ziebart2010modeling, kappen2011optimal,Levine18}.

In this work, we present another point of view on reinforcement learning as a distribution over trajectories, one in which we draw upon useful abstractions from statistical physics.
This view is in some ways a natural continuation of the agenda of connecting control to inference, as many insights in probabilistic graphical models have deep connections to, e.g., spin glass systems \citep{hopfield1982neural,yedidia2001generalized,zdeborova2016statistical}. 
More generally, physics has often been a source of inspiration for ideas in machine learning \citep{mackay2003information, mezard2009information}.
Boltzmann machines \citep{ackley85}, Hamiltonian Monte Carlo \citep{duane1987hybrid, neal2011mcmc,betancourt2017conceptual} and, more recently, tensor networks \citep{stoudenmire2016supervised} are a few examples.
In addition to direct inspiration, physics provides a compelling framework to reason about certain problems.
The terms \emph{momentum}, \emph{energy}, \emph{entropy}, and \emph{phase transition} are ubiquitous in machine learning.
However, abstractions from physics have generally not been so far helpful for understanding reinforcement learning models and algorithms.
That is not to say there is a lack of interaction; RL is being used in some experimental physics domains, but physics has not yet as directly informed RL as it has, e.g., graphical models \citep{MLPhysicalScience}.

Nevertheless, we should expect deep connections between reinforcement learning and physics: an RL agent is trying to find a policy that maximizes expected reward and many natural phenomena can be viewed through a minimization principle.
For example, in classical mechanics or electrodynamics, a mass or light will follow a path that minimizes a physical quantity called the \emph{action}, a property known as the \emph{principle of least action}. 
Similarly, in thermodynamics, a system with many degrees a freedom---such as a gas---will explore its configuration space in the search for a configuration that minimizes its free energy.
In reinforcement learning, rewards and value functions have a very similar flavor to energies, as they are extensive quantities and the agent is trying to find a path that maximizes them.
In RL, however, value functions are often treated as the central object of study.
This stands in contrast to statistical physics formulations of such problems in which the \emph{partition function} is the primary abstraction, from which all the relevant thermodynamic quantities---average energy, entropy, heat capacity---can be derived.
It is natural to ask, then, \emph{is there a theoretical framework for reinforcement learning that is centered on a partition function, in which value functions can be interpreted via average energies?}

In this work, we show how to construct a partition function for a reinforcement learning problem.
In a deterministic environment (Section~\ref{sec:deterministic}), the construction is elementary and very natural.
We explicitly identify the link between the underlying average energies associated with these partition functions and value functions of Boltzmann-like stochastic policies.
As in the inference-based view on RL, moving from deterministic to stochastic environments introduces complications.
In Section \ref{sec:stochastic}, we propose a construction for stochastic environments that results in realistic policies.
Finally, in Section \ref{sec:modelFree}, we show how the partition function approach leads to an alternative model-free reinforcement learning algorithm that does not explicitly represent value functions.

We model the agent's sequential decision-making task as a Markov decision process (MDP), as is typical.
The agent selects actions in order to maximize its cumulative expected reward until a final state is reached. 
The MDP is defined by the objects~$(\mcS,\mcA,\mcR,\mcP)$.
$\mcS$ and~$\mcA$ are the sets of states and actions, respectively.
${\mcP(s,a,s') = \mathbb{P}(s'\mid s,a)}$ is the probability of landing in state~$s'$ after taking action~$a$ from state~$s$.
$\mcR(s,a,s')$ is the reward resulting from this transition. 
We also make the following additional assumptions: 1)~$\mcS$ is finite, 2) all rewards~$\mcR(s,a,s')$ are bounded from above by~$\mcR_{\text{max}}$ and deterministic, and 3) the number of available actions is uniformly bounded over all states by~$d$.
We also allow for terminal states to have rewards  even though there are no further actions and transitions. We denote these final-state rewards by~$\mcR(s_f)$. By shifting all rewards by~$\mcR_{\text{max}}$ we can assume without loss of generality that~${\mcR_{\text{max}}=0}$ making all transition  rewards~$\mcR(s,a,s')$ non positive. The final state rewards~$\mcR(s_f)$ are still allowed to be positive however.

\section{Partition Functions for Deterministic MDPs} \label{sec:deterministic}

Our starting point is to consider deterministic Markov decision processes.
Deterministic MDPs are those in which the transition probability distributions assign all their mass to one state.
Deterministic MDPs are a widely studied special case \citep{madani2002polynomial, wen2013efficient,  dekel2013better} and they are realistic for many practical control problems, such as robotic manipulation and locomotion, drone maneuver or machine-controlled scientific experimentation.
For the deterministic setting, we will use~${s+a}$ to denote the state that follows the taking of action~$a$ in state~$s$.
Similarly, we will denote the reward more concisely as~$\mcR(s,a)$.

\subsection{Construction of State-Dependent Partition Functions}
\label{sec:deterministicConstruction}
To construct a partition function, two ingredients are needed: a statistical ensemble, and an energy function~$E$ on that ensemble.
We will construct our ensembles from trajectories through the MDP; a trajectory~$\omega$ is a sequence of tuples~${\omega = (s_0, a_0, r_0), (s_1, a_1, r_1),\ldots,(s_T,a_T,r_T)}$ such that state~$s_{T+1}$ is a terminal state.
We use the notation~$s_t(\omega)$,~$a_t(\omega)$, and~$r_t(\omega)$ to indicate the state, action, and reward, respectively, of trajectory~$\omega$ at step~$t$.
Each state-dependent ensemble~$\Omega(s)$ is then the set of all trajectories that start at~$s$, i.e., for which~${s_0(\omega) = s}$.
We will use these ensembles to construct a partition function for each state~${s \in \mcS}$.
Taking~$|\omega|$ to be the length of the trajectory, we write the energy function as
\begin{align}
E(\omega) &= -\sum_{t=0}^{|\omega|-1} r_t(\omega) - R(s_{|\omega|}) = -\sum_{t=0}^{|\omega|} r_t(\omega)\,.
\end{align}
The form on the right takes a notational shortcut of defining~$r_{|\omega|}(\omega):=R(s_{T+1})$ for the reward of the terminal state.
Since the agent is trying to maximize their cumulative reward,~$E(\omega)$ is a reasonable measure of the agent's preference for a trajectory in the sense that lower energy solutions accumulate higher rewards.
Note in particular that the ground state configurations are the most rewarding trajectories for the agent.
With the ingredients~$\Omega(s)$ and~$E(\omega)$ defined, we get the following partition function
\begin{align}
\mcZ(s,\beta) = \sum_{\omega \in \Omega(s)}e^{-\beta \,E(\omega)}=\sum_{\omega \in \Omega(s)} e^{\beta \sum_{t=0}^{|\omega|} r_t(\omega)}\,.
\end{align}
In this expression,~$\beta \geq 0~$ is a hyper-parameter that can be interpreted as the inverse of a temperature.
(This interpretation comes from statistical physics where~${\beta = \frac{1}{K_B T }}$, where~$K_B$ is the Boltzmann constant.)
This partition function does not distinguish between two trajectories having identical cumulative rewards but different lengths.
However, among equivalently rewarding trajectories, it seems natural to prefer shorter trajectories.
One way to encode this preference is to add an explicit penalty~${\mu\leq 0}$ on the length~$|\omega|$ of a trajectory, leading to a partition function
\begin{align}\label{eq:partition}
\mcZ(s,\beta) = \sum_{\omega \in \Omega(s)} e^{-\beta \, E(\omega) + \mu |\omega| }\,.
\end{align}
In statistical physics,~$\mu$ is called a \emph{chemical potential} and it measures the tendency of a system (such as a gas) to accept new particles.
It is sometimes inconvenient to reason about systems with a  fixed number of particles, adding a chemical potential offers a way to relax that constraint, allowing a system to have a varying number of particles while keeping the average fixed.

Note that since MDPs can allow for both infinitely long trajectories and infinite sets of finite trajectories,~$\Omega(s)$ can be infinite even in relatively simple settings.
In Appendix~\ref{proof:wellDefined}, we find that a sufficient condition for~$\mcZ(s,\beta)$ to be well defined is taking~${\mu < -\log{d}}$.
As written, the partition function in Eq.~\ref{eq:partition} is ambiguous for final states.
For clarity we define~${\mcZ(s_f,\beta) := e^{\beta \, R(s_f)}}$ for a terminal state~$s_f$.
We will refer to these as the boundary conditions.

Mathematically, the parameter~$\mu$ has a similar role as the one played by~$\gamma$, the discount rate commonly used in reinforcement learning problems.
They both make infinite series convergent in an infinite horizon setting, and ensure that the Bellman operators 
are contractions in their respective frameworks~(\ref{proof:contraction} ,\ref{proof:rhoContraction}).
However, when using~$\gamma$, the order in which the rewards are observed can have an impact on the learned policy which does not happen when~$\mu$ is used.
This could be a desirable property for some problems as it uncouples rewards from preferences for shorter paths.

\subsection{A Bellman Equation for~$\mcZ$}
\label{sec:deterministicBellman}

As we have defined an ensemble~$\Omega(s)$ for each state~${s\in\mcS}$, there is a partition function~$\mcZ(s,\beta)$ defined for each state.
These partition functions are all related through a Bellman-like recursion:
\begin{align}\label{eq:z-bellman}
\mcZ(s, \beta) &= \sum_{a } e^{\beta\, \mcR(s,a) +\mu} \;\mcZ(s+a, \beta)\,,
\end{align}
where, as before,~${s+a}$ indicates the state deterministically following from taking action~$a$ in state~$s$.
This Bellman equation can be easily derived by decomposing each trajectory~${\omega \in \Omega(s)}$ into two parts: the first transition resulting from taking initial action~$a$ and the remainder of the trajectory~$\omega'$ which is a member of~$\Omega(s+a)$.
The total energy and length can also be decomposed in the same way, so that:
\begin{align*}
\mcZ(s, \beta) &= \sum_{\omega \in \Omega(s)} e^{-\beta \,E(\omega) + \mu |\omega|}
= \sum_{\omega \in \Omega(s)} e^{\beta\sum_{t=0}^{|\omega|}  r_{t}(\omega) + \mu |\omega|}\\
&=  \sum_{a\in\mcA} e^{\beta\,\mcR(s,a) +\mu}
\sum_{\omega' \in \Omega(s+a) }  e^{\beta \sum_{t=1}^{|\omega|}  r_{t}(\omega) + \mu( |\omega|-1)}
= \sum_{a\in\mcA} e^{\beta\,\mcR(s,a) +\mu}
\sum_{\omega' \in \Omega(s+a) }e^{-\beta \,E(\omega') + \mu|\omega'|}\\
&=  \sum_{a } e^{\beta \,\mcR(s,a) +\mu} \;\mcZ(s+a, \beta)\,.
\end{align*}
Note in particular that this Bellman recursion is \textbf{linear} in~$\mcZ$.

\subsection{The Underlying Value Function and Policy}\label{sec:deterministicPolicy}
The partition function can be used to compute an average energy to shed light on the behavior of the system.
This average is computed under the Boltzmann (Gibbs) distribution induced by the energy on the ensemble of trajectories :
\begin{align}
\bbP(\omega \given \beta, \mu, s_0(\omega)=s) &= \frac{\indicator_{\Omega(s)}(\omega)}{\mcZ(s,\beta)}e^{-\beta\,E(\omega) + \mu|\omega|}\,.
\end{align}
In probabilistic machine learning, this is usually how one sees the partition function: as the normalizer for an energy-based learning model or an undirected graphical model (see, e.g., \citet{murray2004bayesian}). Under this probability distribution, high-reward trajectories are the most likely but sub-optimal ones could still be sampled. This approach is closely related to the {\it soft-optimality} approach to RL \citep{Levine18}.
This distribution over trajectories allows us to compute an average energy for state~$s$ either as an explicit expectation or as the partial derivative of the log partition function with respect to the inverse temperature:
\begin{align}
\langle E \rangle &= \sum_{\omega\in\Omega(s)}\frac{1}{\mcZ(s,\beta)}e^{-\beta\,E(\omega) + \mu|\omega|}
E(\omega) = -\frac{\partial}{\partial \beta}\log \mcZ(s,\beta)\,.
\end{align}
The negative of the average energy is the value function:~$V(s,\beta) := -\langle E \rangle = \frac{\partial}{\partial \beta}\log \mcZ(s,\beta)$.
This is an intuitive result: recall that the energy~$E(\omega)$ is low when the trajectory~$\omega$ accumulates greater rewards, so lower average energy indicates that the expected cumulative reward---the value---is greater.
Since the partition functions~$\{\mcZ(s,\beta)\}_{s \in S}$ are connected by a Bellman equation, we expect that the underlying value functions~$\{V(s,\beta)\}_{s \in S}$ would be connected in a similar way, and there is indeed a non-linear Bellman recursion:
\begin{align*}
V(s,\beta) &= \frac{\partial}{\partial\beta}\log\mcZ(s,\beta)
= \frac{1}{\mcZ(s,\beta)}\frac{\partial}{\partial\beta}\mcZ(s,\beta)
= \frac{1}{\mcZ(s,\beta)} \frac{\partial}{\partial \beta}  \sum_{a\in\mcA} e^{\beta \,\mcR(s,a) +\mu} \;\mcZ(s+a, \beta)\\
&= \frac{1}{\mcZ(s,\beta)} \sum_{a\in\mcA} e^{\beta \,\mcR(s,a) +\mu} \frac{\partial}{\partial\beta}\mcZ(s+a, \beta)+\mcR(s,a)e^{\beta \,\mcR(s,a) +\mu} \;\mcZ(s+a, \beta) \,.
\end{align*}
The derivative rule for natural log gives us~$\frac{\partial}{\partial\beta}\mcZ(s,\beta)=\mcZ(s,\beta)\frac{\partial}{\partial\beta}\log\mcZ(s,\beta) = \mcZ(s,\beta) V(s,\beta)$, so:
\begin{align}
V(s,\beta) &= \frac{1}{\mcZ(s,\beta)} \sum_{a\in\mcA} e^{\beta \,\mcR(s,a) +\mu}\mcZ(s+a,\beta) V(s+a,\beta) +\mcR(s,a)e^{\beta \,\mcR(s,a) +\mu} \;\mcZ(s+a, \beta)\notag\\
&= \frac{1}{\mcZ(s,\beta)} \sum_{a\in\mcA} e^{\beta \,\mcR(s,a) +\mu}\mcZ(s+a,\beta)\left[
V(s+a,\beta) + \mcR(s,a)
\right]\,.\label{eq:value-bellman}
\end{align}
Note that the quantities~$e^{\beta \,\mcR(s,a) +\mu}\mcZ(s+a,\beta)$ inside the summation of Eq.~\ref{eq:value-bellman} are positive and sum to~$\mcZ(s,\beta)$ due to the Bellman recursion for~$\mcZ(s,\beta)$ from Eq.~\ref{eq:z-bellman}.
Thus we can view this Bellman equation for~$V(s,\beta)$ as an expectation under a distribution on actions, i.e., a \emph{policy}:
\begin{align}
	V(s,\beta) &= \sum_{a\in\mcA} \pi(a \given s)\left[
V(s+a,\beta) + \mcR(s,a)
\right] & \pi(a \given s) &= \frac{\mcZ(s+a,\beta)}{\mcZ(s,\beta)}e^{\beta \,\mcR(s,a) +\mu}\,.
\end{align}
The policy~$\pi$ resembles a Boltzmann policy but strictly speaking it is not. A Boltzmann policy~$\pi_B$ selects actions proportionally to the exponential of their expected cumulative reward:~${\pi_{\text{B}}(a \mid s) \propto \exp\left(\beta \left[\mcR(s,a)+V(s+a) \right]\right)}$.  In particular,~$\pi_B$ does not take {\it entropy} into account: if two actions have the same expected optimal value, they will be picked with equal probability regardless of the possibility that one of them could achieve this optimality in a larger number of ways. In the partition function view,~$\pi$ does take entropy into account and to clarify this difference we will look at the two extreme cases~${\beta \to \{0,\infty\}}$.
\clearpage
\noindent When~${\beta \to 0}$, where the temperature of the system is infinite, rewards become irrelevant and we find that:~${\pi(a \mid s) \propto \sum_{\omega \in \Omega(s+a) } e^{\mu |\omega|}}$.  This means that~$\pi$ is picking action~$a$ proportionally to the number of trajectories that begin with~${s+a}$.  Here the counting of trajectories happens in a weighted way: longer trajectories contribute less than shorter ones. This is different from a Boltzmann policy that would pick actions uniformly at random.
\medbreak
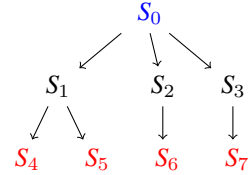
\begin{wrapfigure}{r}{0.35\textwidth}%
\vspace{-0.5cm}%
{\normalsize%
\begin{center}%
\begin{forest}%
sn edges%
[{\color{blue}~$S_0$}%
	[$S_1$%
		[{\color{red}~$S_4$}%
		]%
		[{\color{red}~$S_5$}%
		]%
	]%
	[$S_2$%
		[{\color{red}~$S_6$}%
		]%
	]%
	[$S_3$%
		[{\color{red}~$S_7$}%
		]%
	]%
]%
\end{forest}%
\end{center}%
}%
\vspace{-0.5cm}%
\captionof{figure}{Decision Tree MDP}\label{fig:tree}%
\vspace{-0.5cm}%
\end{wrapfigure}%

\noindent When~${\beta \to \infty}$, the low-temperature limit, we find in Section~\ref{proof:Boltzmann}  that~${\pi(a \mid s) \propto  N_{\max}(s+a) \exp\left(\beta \left[\mcR(s,a)+V(s+a) \right]\right)}$  where ${N_{\max}(s+a)}$ is a weighted count of the number of {\bf optimal} trajectories that begin at the state~${s+a}$. Boltzmann policies completely  ignore the~$N_{\max}$ entropic factor.
\medbreak
To illustrate this difference more clearly, we consider the  deterministic decision tree MDP shown in Figure~\ref{fig:tree} where~$S_0$ is the initial state and the leafs~$S_4$,~$S_5$,~$S_6$, and~$S_7$ are the final states. 
The arrows represent the actions available at each state.
There are no rewards and the boundary conditions are:~${\mcR(S_{4}) = \mcR(S_5) = \mcR(S_6) =1}$ and~${\mcR(S_7) = 0}$.
This gives us the boundary condition:~${\mcZ(S_{4},\beta) = \mcZ(S_5,\beta) = \mcZ(S_6,\beta) =e^{\beta}}$ and~${\mcZ(S_7,\beta) = 1}$. Computing the $\mcZ$-functions at the intermediate states~$S_1, S_2$ and~$S_3$ we find: ~${\mcZ(S_1,\beta) = 2e^{\beta +\mu}}$,~${\mcZ(S_2,\beta) = e^{\beta +\mu}}$ and~${\mcZ(S_3,\beta) = e^{\mu}}$.  Finally we have~${\mcZ(S_0,\beta) = 3e^{\beta+2\mu}+e^{2\mu}}$.  
The underlying policy for picking the first action is given by:
\begin{align}
\pi_{\beta}(1\mid 0) =\frac{2e^{\beta +2\mu}}{3e^{\beta+2\mu}+e^{2\mu}} ~~~~~ \pi_{\beta}(2\mid 0) =\frac{e^{\beta +2\mu}}{3e^{\beta+2\mu}+e^{2\mu}} ~~~~~ \pi_{\beta}(3\mid 0) =\frac{e^{2\mu}}{3e^{\beta+2\mu}+e^{2\mu}}
\end{align}
When~$\beta \to 0$,  we get:  ~$\pi_0(1\mid 0) = \frac{1}{2}, ~~ \pi_0(2 \mid 0) = \frac{1}{4}, ~~ \pi_0(3 \mid 0)  = \frac{1}{4}$.
A Boltzmann policy would pick these three actions with equal probability.  The policy~$\pi$ is biased towards the heavier subtree.\\
When~$\beta \to \infty$ we get: ~$\pi_{\infty}(1\mid 0) = \frac{2}{3}, ~~ \pi_{\infty}(2 \mid 0) = \frac{1}{3}, ~~ \pi_{\infty}(3 \mid 0)=0$. A Boltzmann policy would pick action~$1$ and~$2$ with a probability of~$\frac{1}{2}$. ~$\pi$ prefers states from which many possible optimal trajectories are possible.

\subsection{A Planning Algorithm}

When the dynamics of the environment are known, it is possible to to learn~$~\mcZ(s,\beta)$ by exploiting the Bellman equation (\ref{eq:z-bellman}). 
We denote by~${s \to s'}$ the property that there exists an action~$a$ that takes an agent from state~$s$ to state~$s'$. The reward associated with this transition will be denoted~$\mcR({s \to s'})$.  Let~${\mcZ(\beta) = [\mcZ(s,\beta)]_{s\in \mcS }}$ be the vector of all partition functions and~${C(\beta) \in \mathbb{R}^{|\mcS| \times |\mcS|}}$ be the matrix:
\begin{align}C(\beta)_{s,s'} =  \indicator_{s \to s'}e^{\beta \mcR({s \to s'})+\mu} + \indicator_{s = s' = \text{final state}}
\end{align}
$C(\beta)$ is a matrix representation of the Bellman operator in Eq.~\ref{eq:z-bellman}. With these notations, the Bellman equations in (\ref{eq:z-bellman}) can be compactly written as:~${\mcZ(\beta) =   C(\beta)\,  \mcZ(\beta)}$ highlighting the fact that~$\mcZ(\beta)$ is a fixed point of the map:~${\phi: X \to C(\beta)  X}$. In Appendix~\ref{proof:contraction}, we show that~$\phi$ is a contraction which makes it possible to learn~$\mcZ(\beta)$  by starting with an initial vector~$\mcZ_0$ having compatible boundary conditions and successively iterating the map~$\phi$:~$\mcZ_{n+1} = C(\beta) \mcZ_{n}$. We could also interpret ~$\mcZ(\beta)$ as an eigenvector of~$C(\beta)$. In this context, this algorithm is simply doing a power method.

Interestingly, we can learn~$\mcZ(\beta)$ by solving the underdetermined linear system~${[I_{|\mcS|}- C(\beta)] \,~\mcZ(\beta) = 0_{|\mcS|}}$ with the right boundary conditions.
We show in Appendix~\ref{proof:Boltzmann} that the policies learned are related to Boltzmann policies which produce non linear Bellman equations at the value function level: 
 \begin{align}
 V(s, \beta) =  \sum_{a } \frac{e^{\beta \left(\mcR(s,a) + \gamma V(s+a, \beta) \right)}}{\mcW(s,\beta)}   [r_{(s,a)}  +   \gamma V(s+a,\beta )]
 \end{align}
where~$\gamma$ is the discount factor and~$\mcW(s,\beta) =\sum_{a } e^{\beta \left(\mcR(s,a) + \gamma V(s+a, \beta) \right)}$ is a normalization constant different from~$\mcZ(s,\beta)$.
By working with partition functions we transformed a non linear problem into a linear one. This remarkable result is reminiscent of linearly solvable MDPs \citep{todorov2007linearly}.\\\\
Once~$\mcZ$ is learned the agent's policy is given by:~${\,\mathbb{P}(a \mid s) \propto ~e^{\beta\mcR(s,a)}\mcZ(s+a,\beta)}$.

\section{Partition functions for Stochastic MDPs}
We now move to the more general MDP setting.  The dynamics of the environment can now be stochastic. However, as mentioned at the end of the introduction, we still assume that given an initial state~$s$, an action~$a$, and a landing state~$s'$, the reward~$\mcR(s,a,s')$ is deterministic.
\subsection{A First Attempt: Averaging the Bellman Equation}
\label{sec:unrealisticPolicy}
A first approach to incorporating the stochasticity of the environment is to average the right-hand side of the Bellman equation (\ref{eq:z-bellman}) and define~$\mcZ(s,\beta)$ as the solution of:
\begin{equation}
\label{eq:avgBellman}
\mcZ(s,\beta) = \sum_{a} \mathbb{E}_{s' \mid s,a} \left[e^{\beta \mcR(s,a,s') +\mu }~\mcZ(s',\beta)\right] = \sum_{a,s'} \mathbb{P}(s'\mid s,a) \, e^{\beta \mcR(s,a,s') +\mu }~\mcZ(s',\beta)\,.
\end{equation}
Interestingly, the solution of this equation can be constructed in the same spirit of Section~\ref{sec:deterministicConstruction} by summing a functional over the set of trajectories. If we define~$L(\omega)$ to be the log likelihood of a trajectory:~${L(\omega) =  \sum_{t=0}^{|\omega|-1} \log{\mathbb{P}(s_{t+1}\mid{s_t,a_t})}}$ then~$\mcZ(s,\beta)$ is defined by
\begin{align}
\label{eq:unrealisticZ}
\mcZ(s,\beta) = \sum_{\omega \in \Omega(s)} e^{-\beta E(\omega)+ \mu |\omega| + L(\omega) }\,,
\end{align}
satisfies the Bellman equation (\ref{eq:avgBellman}). The proof can be found in Appendix~\ref{proof:avgBellman}. In Appendix~\ref{proof:unrelasticBellman} we derive the Bellman equation satisfied by the underlying value function~$V(s,\beta)$ and we find: 
\begin{align}
\label{eqn:unrealisticPolicy}
V(s,\beta)&= \sum_{a,s'} \frac{ e^{\beta  \mcR(s,a,s')+\mu} \,\,~\mcZ(s',\beta)}{\mcZ(s,\beta)}\times \mathbb{P}(s'\mid s,a) \times \left( \mcR(s,a,s') +V(s',\beta) \right) \,.
\end{align}
This Bellman equation does not correspond to a realistic policy; the policy depends on the landing state~$s'$ which is a random variable. The agent's policy and the environment's transitions cannot be decoupled. This is not surprising, from Eq.~\ref{eq:unrealisticZ} we see that~$\mcZ$ puts rewards and transition probabilities on an equal footing. As a result an agent believes they can choose any available transition as long as they are willing to pay the price in log probability. This encourages risky behavior: the agent is encouraged to bet on highly unlikely but beneficial transitions. These observations were also noted in \citet{Levine18}. 

\subsection{A Variational Approach}\label{sec:stochastic}
Constructing a partition function for a stochastic MDP is not straightforward because there are two types of randomness: the first comes from the agent's policy and the second from stochasticity of the environment.  Mixing these two sources of randomness can lead to unrealistic policies as we saw in Section \ref{sec:unrealisticPolicy}. A more principled approach is needed.

We construct a new deterministic MDP~$(\tilde{\mcS},\tilde{\mcA},\tilde{\mcR},\tilde{\mcP})$ from~$(\mcS,\mcA,\mcR,\mcP)$. We take~$\tilde{\mcS}$ to be the space of probability distributions over~$\mcS$, similar to belief state representations for partially-observable MDPs \citep{astrom1965optimal,sondik1978optimal,kaelbling1998planning}.  We make the assumption that the actions~$\mcA$ are the same for all states and take~${\tilde{\mcA} = \mcA}$.
For~${\rho \in \tilde{\mcS}}$ and~${a \in \tilde{\mcA}}$ we  define~${\tilde{\mcP}(\rho,a) := {P_a}^T \rho}$ where~$P_a$ is the transition matrix corresponding to choosing action~$a$ in the original MDP.  We define~${\tilde{\mcR}(\rho,a) := \E_{s \sim \rho} \left[\E_{ s' \mid s,a} [\mcR(s,a,s')] \right]}$. 

$\mcS$ being finite, it has a finite number~$M$ of final states which we denote~$\{{f_i}\}_{i\in\{1,\cdots,M\}}$. The final states of~$\tilde{\mcS}$  are of the form~${\rho_f = \sum_{i=1}^M \alpha_i \delta_{{f_i}}}$ where~${0\leq \alpha_i \leq 1}$ verify~${\sum_{i=1}^M \alpha_i =1}$ and~$\delta_{f_i}$ is a Dirac delta function at state~$f_i$. The intrinsic value~$\rho_f$ of such a final state is then given by~${\mcR(\rho_f) = \sum_{i=1}^M \alpha_i \mcR({f_i})}$. This leads to the  boundary conditions: 
\begin{align}
\label{eqn:rhoBoundary}
\mcZ(\rho_f) = \exp \left(\beta \sum_{i=1}^M \alpha_i \mcR({s_{f_i}})\right) = \prod_{i=1}^M \mcZ({f_i},\beta)^{\alpha_i} \,.
\end{align}
This new MDP~$(\tilde{\mcS},\tilde{\mcA},\tilde{\mcR},\tilde{\mcP})$ is deterministic, and we can follow the same approach of Section~\ref{sec:deterministic} and construct a partition function ~$\mcZ(\rho,\beta)$ on~$\tilde{\mcS}$. ~$\mcZ(s,\beta)$ can be recovered by evaluating~$\mcZ(\delta_{s},\beta)$. From this construction we also get that~$\mcZ(\rho,\beta)$ satisfies the following Bellman equation:
\begin{align}
\label{eqn:bellmanRho}
\mcZ(\rho,\beta) = \sum_{a} e^{\beta \mcR(\rho,a) +\mu } ~\mcZ({P_{a}}^T \rho,\beta)\,.
\end{align}
Just as it is the case for deterministic MDPs, the Bellman operator associated with this equation is a contraction. This is proved in Appendix~\ref{proof:rhoContraction}. However~$\tilde{\mcS}$ is now infinite which makes solving Eq.~\ref{eqn:bellmanRho} intractable.  We adopt a variational approach which consists in finding the best approximation of~$\mcZ(\rho,\beta)$ within a parametric family~$\{\mcZ_{\theta}\}_{\theta \in \Theta}$. We measure the fitness of a candidate through the following loss function:~${\Delta(\theta) = \frac{1}{|\mcS|} \sum_{s \in \mcS} \left(\mcZ_{\theta}(\delta_{s},\beta) -\sum_{a} e^{\beta \mcR(\delta_{s},a) +\mu } ~\mcZ_{\theta}({P_{a}}^T \delta_{s},\beta) \right)^2 }$.

For illustration purposes, and inspired by the form of the boundary conditions (\ref{eqn:rhoBoundary}), we  consider a simple parametric family given by  the partition functions of the form~${\mcZ_{\theta}(\rho) = \prod_{i=1}^{|\mcS|} {\theta_{i}}^{\rho_i}}$, where~${\theta \in \mathbb{R}^{|\mcS|}}$. The optimal~$\theta$ can be found using usual optimization techniques such as gradient descent. By evaluation of~$\mcZ_{\theta}$ at~${\rho = \delta_{S_i}}$ we see that we must have~${\theta_i = \mcZ(\delta_{S_i})= \mcZ(S_i)}$ and consequently we have~${\mcZ_{\theta}(\rho) = \prod_{i=1}^{|\mcS|} {\mcZ(S_i)}^{\rho_i}}$. The optimal solution satisfies the following Bellman equation:
\begin{align} 
\label{eqn:variationalBellman}
\mcZ(s,\beta) \approx \sum_{a}\prod_{s' \in \mcS}\left[ e^{\beta \mcR(s,a,s') +\mu } ~ \mcZ(s',\beta)\right]^{\mathbb{P}(s' \mid s,a)}
\end{align}
The underlying value function verifies~${V(s,\beta) \approx \sum_{a,s'} \pi(a \mid s)\, \mathbb{P}(s' \mid s,a) \left(\mcR(s,a,s') + V(s',\beta) \right)}~$ where the policy~$\pi$ is given by~${\pi(a \mid s) \propto\prod_{s' \in \mcS} \left[e^{\beta  \mcR(s,a,s')+\mu} \,~\mcZ(s',\beta)\right]^{\mathbb{P}(s' \mid s,a)}}$. This approach leads to a realistic policy as its only dependency is on the current state, not a future one, unlike the policies arising from Eq.~\ref{eqn:unrealisticPolicy}.

\section{The Model-Free Case}
\label{sec:modelFree}
\subsection{Construction of  State-Action-Dependent Partition Function}
In a model free setting, where the transition dynamics are unknown, state-only value functions such as~$V(s)$ are less useful than state-action value functions such as~$Q(s,a)$. Consequently, we will extend our construction to state-action partition functions~$\mcZ(s,a,\beta)$. For a deterministic environment, we extend the construction in Section \ref{sec:deterministic} and define~$\mcZ(s,a,\beta)$ by
\begin{align}
\mcZ(s,a,\beta) =  \sum_{\omega \in \Omega(s,a)} e^{-\beta E(\omega)+\mu|\omega|} =\sum_{\omega \in \Omega(s,a)} e^{\beta \sum_{i=0}^{|\omega|}r_i+\mu|\omega|}
\end{align}
where~$\Omega(s,a)$ denotes the set of trajectories having~${(s_0,a_0)=(s,a)}$. Since~${\Omega(s) = \bigcup_{a\in \mcA} \Omega(s,a)}$, we have
${\mcZ(s,\beta) = \sum_{a} \mcZ(s,a,\beta)}$.  As a consequence of this construction,~$\mcZ(s,a,\beta)$ satisfies the following linear Bellman equation:
\begin{align}
\mcZ(s,a,\beta) = e^{\beta \mcR(s,a) +\mu}\sum_{a'} \mcZ(s+a,a',\beta)\,.
\end{align}
This Bellman equation can be easily derived by decomposing each trajectory~${\omega \in \Omega(s,a)}$ into two parts: the first transition resulting from taking initial action~$a$ and the remainder of the trajectory~$\omega'$ which is a member of~$\Omega(s+a,a')$ for some action~$a' \in \mcA$ .
The total energy and length can also be decomposed in the same way, so that:
\begin{align*}
\mcZ(s,a,\beta) &=  \sum_{\omega \in \Omega(s,a)} e^{-\beta E(\omega)+\mu|\omega|} =\sum_{\omega \in \Omega(s,a)} e^{\beta \sum_{i=0}^{|\omega|}r_i+\mu|\omega|}\\
&=  e^{\beta\,\mcR(s,a) +\mu}
\sum_{\omega \in \Omega(s,a) }  e^{\beta \sum_{t=1}^{|\omega|}  r_{t}(\omega) + \mu (|\omega|-1)}
= e^{\beta\,\mcR(s,a) +\mu}
\sum_{\omega' \in \Omega(s+a) }e^{-\beta \,E(\omega') + \mu|\omega'|}\\
&= e^{\beta\,\mcR(s,a) +\mu}
\sum_{a' \in \mcA } \sum_{\omega' \in \Omega(s+a,a') } e^{-\beta \,E(\omega') + \mu|\omega'|} =   e^{\beta\,\mcR(s,a) +\mu}
\sum_{a' \in \mcA } \mcZ(s+a,a',\beta).
\end{align*}
\noindent In the same spirit of Section \ref{sec:deterministicPolicy}, one can show that the average underlying value function ${Q(s,a,\beta)= \frac{\partial}{\partial \beta} \log{\mcZ(s,a,\beta)}}$ satisfies a Bellman equation: 
\begin{align}
Q(s,a,\beta) = \mcR(s,a)+ \sum_{a'} \pi(a' \mid s+a)~Q(s+a,a',\beta) ~~~~~~~~ \pi(a \mid s ) = \frac{\mcZ(s,a,\beta)}{\sum_{a'} \mcZ(s,a',\beta)}
\end{align} 
$Q(s,a,\beta)$ can be then reinterpreted as the~$Q$-function of the policy~$\pi$. Similarily to the results of Section~\ref{sec:deterministicPolicy} and Appendix~\ref{proof:Boltzmann}, the policy~$\pi$ can be thought of a Boltzmann policy of parameter~$\beta$ that takes entropy into account. 
This construction can be extend to a stochastic environments by following the same approach used in Section~\ref{sec:stochastic}.\\

In the following we show how learning the state-action partition function~$\mcZ(s,a,\beta)$ leads to an alternative approach to model-free reinforcement learning  that does not explicitly represent value functions. 

\subsection{A Learning Algorithm}

In~$Q$-Learning, the update rule typically consists of a linear interpolation between the current value estimate and the one arising \emph{a posteriori}:\begin{align}
Q(s_t,a_t) \leftarrow (1-\alpha)Q(s_t,a_t)+\alpha\left(r_t+\gamma \max_{a_{t+1}}Q(s_{t+1},a_{t+1})\right)
\end{align}
 where~${\alpha\in [0,1]}$ is the learning rate and~$\gamma$ is the discount factor.  For~$\mcZ$-functions we will replace the linear interpolation with a geometric one. We take the update rule for~$\mcZ$-functions to be the following:
\begin{align}
\label{eqn:updateRule}
\mcZ(s_t,a_t,\beta) \leftarrow \mcZ(s_t,a_t,\beta)^{1-\alpha} \times \left(e^{\beta r_t +\mu}\sum_{a_{t+1}} \mcZ(s_{t+1},a_{t+1},\beta) \right)^{\alpha}\,.
\end{align}
To understand what this update rule is doing, it is insightful to look at how how the underlying ~$Q$-function, ~~${Q(s,a)=\frac{\partial}{\partial \beta} \log{\mcZ(s_t,a_t,\beta)}}$ is updated. We find:
\begin{align}
\label{eqn:QSARSA}
Q(s_t,a_t,\beta) \leftarrow (1-\alpha) Q(s_t,a_t,\beta) + \alpha \left(r_t+ \sum_{a_{t+1}}\frac{\mcZ(s_{t+1},a_{t+1},\beta)}{\sum_{a'} \mcZ(s_{t+1},a',\beta)}Q(s_{t+1},a_{t+1},\beta) \right)\,.
\end{align}
 We see that we recover a weighted version of the SARSA update rule. This update rule is referred to as \emph{expected} SARSA.  Expected SARSA is known to reduce the variance in the updates by exploiting knowledge about stochasticity in the behavior policy  and hence is considered an improvement over vanilla SARSA \citep{expectedSarsa}.
 
Since the underlying update rule is equivalent to the expected SARSA update rule, we can use any exploration strategy that works for expected SARSA. One exploration strategy could be~$\epsilon$-greedy which consists in taking action~$a = \mbox{argmax}_{a \in \mcA} \mcZ(s,a,\beta)$ with probability~$1-\epsilon$ and picking an action uniformly at random with probability~$\epsilon$.  Another possibility would be a Boltzmann-like exploration which consists in taking action~$a$ with probability~$\mathbb{P}(a \mid s) \propto \mcZ(s,a,\beta)$.

We would like to emphasize that even though the expected SARSA update is not novel, the learned policies through this updates rule are proper to the partition-function approach. In particular, the learned policies~$\pi(a \mid s) \propto \mcZ(s,a,\beta)$ are Boltzmann-like policies with some entropic preference properties as described in Section~\ref{sec:deterministicPolicy} and Appendix~\ref{proof:Boltzmann}.

\section{Conclusion}
In this article we discussed how planning and reinforcement learning problems can be approached through the tools and abstractions of statistical physics.
We started by constructing partition functions for each state of a deterministic MDP and then showed how to extend that definition to the more general stochastic MDP setting through a variational approach.
Interestingly, these partition functions have their own Bellman equation making it possible to solve planning and model-free RL problems without explicit reference to value functions. 
Nevertheless, conventional value functions can be derived from our partition function and interpreted via  average energies.
Computing the implied value functions can also shed some light on the policies arising from these algorithms. 
We found that the learned policies are closely related to Boltzmann policies with the additional interesting feature that they take {\it entropy} into consideration by favoring states from which many trajectories are possible.
Finally, we observed that working with partition functions is more natural in some settings.
In a deterministic environment for example, near-optimal Bellman equations become linear which is not the case in a value-function-centric approach.  

\section{Acknowledgments}
We would like to thank Alex Beatson, Weinan E, Karthik Narasimhan and Geoffrey Roeder for helpful discussions and feedback.
This work was funded by a Princeton SEAS Innovation Grant and the Alfred P. Sloan Foundation.

\bibliography{references}
\bibliographystyle{plainnat}
\clearpage
\appendix
\section{Deterministic MDPs}
\subsection{$\mcZ(s,\beta)$ is well defined}\label{proof:wellDefined}
\begin{prop}
$\mcZ(s,\beta) = \sum_{\omega \in \Omega(s)} e^{\beta \sum_{i=0}^{|\omega|} r_i + \mu |\omega| }$ is well defined for~$\mu < -\log d$.
\end{prop}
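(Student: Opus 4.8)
The plan is to bound $\mcZ(s,\beta)$ termwise, group the trajectories in $\Omega(s)$ by their length, and reduce the whole question to the convergence of a single geometric series whose ratio is $d\,e^{\mu}$.

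First I would exploit the normalization $\mcR_{\text{max}}=0$ fixed in the introduction, which makes every transition reward non-positive: for any finite $\omega\in\Omega(s)$ we have $\beta\sum_{t=0}^{|\omega|-1} r_t(\omega)\le 0$. The only term that can be positive is the terminal reward $r_{|\omega|}(\omega)=\mcR(s_{|\omega|})$, and since $\mcS$ is finite there are only finitely many terminal states, so $C:=\max_{f\text{ terminal}} e^{\beta \mcR(f)}$ is a finite constant. Hence every summand obeys $e^{\beta\sum_{t=0}^{|\omega|} r_t(\omega)+\mu|\omega|}\le C\,e^{\mu|\omega|}$.

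Next I would count trajectories by length. Because the dynamics are deterministic, a finite trajectory starting at $s$ of length $|\omega|=n$ is completely determined by its action sequence $a_0,\dots,a_{n-1}$, and each state offers at most $d$ actions, so $\lvert\{\omega\in\Omega(s):|\omega|=n\}\rvert\le d^{n}$. (Genuinely infinite trajectories, if one chooses to include them in $\Omega(s)$, contribute $e^{\mu\cdot\infty}=0$ since $\mu<0$ while the reward sum stays bounded above, so they can be ignored.) Summing over $n$ then gives
\[
0\le \mcZ(s,\beta)\;=\;\sum_{\omega\in\Omega(s)} e^{\beta\sum_{t=0}^{|\omega|} r_t(\omega)+\mu|\omega|}\;\le\;\sum_{n=0}^{\infty} d^{n}\,C\,e^{\mu n}\;=\;C\sum_{n=0}^{\infty}\bigl(d\,e^{\mu}\bigr)^{n},
\]
and this geometric series converges precisely when $d\,e^{\mu}<1$, i.e.\ when $\mu<-\log d$. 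Since all summands are non-negative, the series is unconditionally summable and $\mcZ(s,\beta)$ is a well-defined real number.

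I do not expect a serious obstacle here; the only care needed is bookkeeping — fixing the convention that length $n$ corresponds to exactly $n$ actions so that the $d^{n}$ count is the right one, and using finiteness of $\mcS$ to get a single uniform bound $C$ on the terminal-reward factor. The bound $d^{n}$ is crude (it ignores that most action sequences need not reach a terminal state, and that states may repeat), but it is already sharp enough to identify the threshold $\mu<-\log d$.
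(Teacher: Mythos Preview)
Your proposal is correct and follows essentially the same route as the paper: bound the terminal-reward factor uniformly using finiteness of~$\mcS$, use non-positivity of the transition rewards (and~$\beta\geq 0$) to drop the energy term, count length-$n$ trajectories by~$d^n$, and sum the resulting geometric series with ratio~$d\,e^{\mu}$. The paper's constant~$e^{\beta K}$ is exactly your~$C$, and the remaining steps match line for line.
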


\begin{proof}
The MDP being finite,~$\mcS$ has a finite number of final state we can then find a constant~$K$ such that, for all final states~$s_f$ we have~$\mcR(s_f) \leq K$.
\begin{align*}
\mcZ(s,\beta) =& \sum_{\omega \in \Omega(s)} e^{\beta \sum_{i=0}^{|\omega|} r_i + \mu |\omega| } \\ 
=& \sum_{\omega \in \Omega(s)} e^{\beta \sum_{i=0}^{|\omega|-1} r_i + \beta \mcR(s_{|\omega|}) +  \mu |\omega| } \\ 
\leq&\,\,  e^{\beta K} \sum_{\omega \in \Omega(s)} e^{\beta \sum_{i=0}^{|\omega|-1} r_i  +  \mu |\omega| } \\
\leq& \,\,  e^{\beta K}  \sum_{\omega \in \Omega(s)} e^{  \mu |\omega| } \\
\leq&  \,\,  e^{\beta K} \sum_{n\in \mathbb{N}} e^{  \mu n} \sum_{\omega \in \Omega(s), \, \abs{\omega}=n} 1\\
\leq&  \,\,  e^{\beta K} \sum_{n\in \mathbb{N}} e^{  \mu n }d^n\\
=&  \,\,  e^{\beta K} \sum_{n\in \mathbb{N}} (e^{ \mu+ \log d })^n
\end{align*}
Where used the fact that all rewards~$\{r_i \}_{i \in \{0,...,|\omega|-1\}}$ are non positive and that the number of available actions at each state is bounded by~$d$.
When~$\mu < -\log {d}$, the sum ~~$\sum_{n \in \mathbb{N}} (e^{ \mu +\log d })^n$ becomes convergent and ~$\mcZ(s,\beta)$ is well defined. 
\end{proof}
\begin{remark}
$\mu < -\log {d}$ is a sufficient condition, but not a necessary one.~$\mcZ(s,\beta)$ could be well defined for all values of~$\mu$. This happens for instance when~$\Omega(s)$ is finite for all~$s$. 
\end{remark}

\subsection{The underlying policy is Boltzmann-like}
\label{proof:Boltzmann}

For high values of~$\beta$, the sum~$\sum_{\omega \in \Omega(s)} e^{\beta \sum_{i=0}^{|\omega|} r_i + \mu |\omega| }$ will become dominated by the contribution of few of its terms. As~$\beta \to +\infty$, the sum will be dominated by the contribution of the paths with the biggest reward. We have
\begin{align*}
\log{\mcZ(s,\beta)} \underset{\beta \to \infty}{\sim} \beta ~ \mbox{max} \left\{\sum_{i=0}^{|\omega|} r_i(\omega),\, \omega \in \Omega(s) \right\} 
\end{align*}
We see that~$ V(s, \beta)= \frac{\partial}{\partial \beta} \log \mcZ(s,\beta) \underset{\beta \to \infty}{\rightarrow} \mbox{max} \left\{\sum_{i=0}^{|\omega|} r_i(\omega),\, \omega \in \Omega(s) \right\} ~$. 

Since the MDP is finite and deterministic, it has a finite number of transitions and rewards. Consequently, the set~$\left\{\sum_{i=0}^{|\omega|} r_i(\omega),\, \omega \in \Omega(s) \right\}$ takes  discrete values, in particular, there is a finite gap~$\Delta$ between the maximum value and the second biggest value of this set. Let's denote by~$\Omega_{\text{max}}(s)$ the set of trajectories that achieve this maximum and by~$N_{\max}(s) = \underset{{\omega \in \Omega_{\text{max}}(s)}}{\sum} e^{\mu |\omega|}~$.

$N_{\max}(s)$ counts the number of trajectories~$\Omega_{\text{max}}(s)$ in a weighted way: longer trajectories contribute less than shorter ones. It is a measure of the size of~$\Omega_{\text{max}}(s)$ that takes into account our preference for shorter trajectories.   Putting everything together we get:
\begin{align*}
\left(\frac{\mcZ(s,\beta)}{e^{\beta V(s,\beta)}} - N_{\max}(s)\right)   \underset{\beta \to \infty}{\leq} e^{-\beta \Delta} \sum_{\omega \in \Omega(s)} e^{  \mu |\omega| } \underset{\beta \to \infty}{\rightarrow}  0 
\end{align*}
This shows that ~$\mcZ(s,\beta)\underset{\beta \to \infty}{\sim} N_{\max}(s)~e^{\beta V(s,\beta)}$, which results in the following policy for~$\beta>>1$:
\begin{align*}
\pi(a \mid s) \underset{\beta \to \infty}{\propto }  N_{\max}(s+a) e^{\beta\left(\mcR(s,a) + V(s+a, \beta)\right)}
\end{align*}
$\pi$ differs from a traditional Boltzmann policy in the following way:  if we have two actions~$a_1$ and~$a_2$ such that ~$\mcR(s,a_1) + V(s+a_1, \beta) =\mcR(s,a_2) + V(s+a_2, \beta)$ but there are twice more optimal trajectories spanning from~$s+a_1$ than there are from~$s+a_2$ then action~$a_1$ will be chosen twice as often as~$a_2$. This is to contrast with the usual Boltzmann policy that will pick~$a_1$ and~$a_2$ with equal probability. When~$N_{\max}(s)$ is the same for all~$s$, we recover a Boltzmann policy.  When~$\beta \to +\infty$ the policy converges to a an optimal policy and~$V$ converges to the optimal value function.

\subsection{$X \to C(\beta)X$~ is a contraction}\label{proof:contraction}

\begin{prop}
Let  ~$\mcX(\beta) = \left\{Z \in \mathbb{R}^{|\mcS|}_{+} \text {  such for all final states }~ s_f  ~\text{we have } Z_{s_f}=e^{\beta \mcR(s_f)} \right\}$  and let ${C(\beta)_{s,s'} =  \indicator_{s \to s'}e^{\beta \mcR(s \to s')+\mu} + \indicator_{s = s' = \text{final state}}}$.  The map defined by  
\begin{equation*}
         \psi: \begin{cases}
              \mcX(\beta)  &\to \mcX(\beta)   \\
               X    &\to C(\beta) \, X\\
           \end{cases}
\end{equation*}
is a contraction for the sup-norm: ~$||x||_{\infty} = \underset{i \in \{1,\cdots,|\mcS|\}}{\max} \abs{x_i}$.
\end{prop}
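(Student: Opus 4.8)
The plan is to verify the hypotheses of the Banach fixed-point theorem for $\psi$ on the metric space $(\mcX(\beta), \norm{\cdot}_\infty)$: namely, that $\mcX(\beta)$ is complete, that $\psi$ sends $\mcX(\beta)$ into itself, and that $\psi$ is Lipschitz with constant strictly smaller than $1$. Completeness is free, since $\mcX(\beta)$ is a closed (indeed convex) subset of $\mathbb{R}^{|\mcS|}$. For stability under $\psi$, I would simply inspect $C(\beta)$: all of its entries are non-negative, so $C(\beta)X$ has non-negative entries whenever $X$ does; and since a terminal state $s_f$ has no outgoing transitions, the $s_f$-row of $C(\beta)$ reduces to the diagonal indicator, giving $(C(\beta)X)_{s_f} = X_{s_f} = e^{\beta \mcR(s_f)}$, so the boundary conditions are preserved and $\psi(X) \in \mcX(\beta)$.

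For the contraction estimate I would take $X, Y \in \mcX(\beta)$, put $\Delta = X - Y$, and observe that $\Delta_{s_f} = 0$ at every terminal state because $X$ and $Y$ obey the same boundary conditions. Fixing a state $s$: if $s$ is terminal then $(C(\beta)\Delta)_s = \Delta_s = 0$; if $s$ is non-terminal then $(C(\beta)\Delta)_s = \sum_{s' : s \to s'} e^{\beta \mcR(s \to s') + \mu}\,\Delta_{s'}$. The key point is to bound this sum: each transition reward satisfies $\mcR(s \to s') \le 0$ because we normalized $\mcR_{\max} = 0$, so $e^{\beta \mcR(s \to s')} \le 1$; and at most $d$ states are reachable from $s$ in one step since there are at most $d$ available actions. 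Hence $\abs{(C(\beta)\Delta)_s} \le d\, e^{\mu}\, \norm{\Delta}_\infty = e^{\mu + \log d}\, \norm{\Delta}_\infty$, and taking the maximum over $s$ yields $\norm{\psi(X) - \psi(Y)}_\infty \le e^{\mu + \log d}\, \norm{X - Y}_\infty$. The assumption $\mu < -\log d$ is precisely what makes the factor $e^{\mu + \log d}$ lie in $[0,1)$, so $\psi$ is a contraction and, by Banach, has a unique fixed point to which the iterates $\mcZ_{n+1} = C(\beta)\mcZ_n$ converge geometrically at rate $e^{\mu + \log d}$.

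I do not expect a real obstacle here — this is the standard row-sum argument showing that a sub-stochastic-type matrix induces a contraction, and the condition $\mu < -\log d$ was essentially designed for it (it is the same condition that makes $\mcZ(s,\beta)$ well defined in Appendix~\ref{proof:wellDefined}). The only thing to handle with a bit of care is the bookkeeping around terminal states: one must check both that $\psi$ leaves their prescribed values untouched, so that $\psi$ is genuinely a self-map of $\mcX(\beta)$, and that their vanishing difference contributes nothing to $C(\beta)\Delta$, so that the estimate does not acquire an extra diagonal term. Once those two observations are in place, the bound is a one-liner.
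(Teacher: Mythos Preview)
Your proposal is correct and follows essentially the same argument as the paper: both verify that $\psi$ preserves the boundary conditions at terminal states, observe that the difference $X-Y$ vanishes there, and then bound the non-terminal rows of $C(\beta)$ by $d\,e^{\mu}<1$ using non-positivity of the rewards and the uniform bound $d$ on the number of actions. The only addition you make is the (harmless) remark about completeness of $\mcX(\beta)$ to invoke Banach, which the paper leaves implicit.
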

\begin{proof}
$\mcX(\beta)$ is the set of all possible partition functions with compatible boundary conditions. The matrix~$C(\beta)$ is more explicitly defined by:
\begin{align*}
C(\beta)_{s,s'} = \begin{cases}1 & \mbox{if s = s'  and state s is a final state.} \\0 & \mbox{if there is no one step transition from state s to state s'.} \\e^{\beta \mcR(s \to s')+\mu} & \mbox{if the transition from state s to state s' generates reward }  \mcR(s \to s'). \end{cases}
\end{align*}
Because ~$C(\beta)_{s,s} =1$ when~$s$ is a final state, the map~$\psi$ is well defined (i.e.~$\mcX(\beta) \to \mcX(\beta)$).
Since the MDP is finite, it has a finite number of final state so there exists a constant~$K$ such that, for all final states~$s_f$ we have~$\mcR(s_f) \leq K$.\\\\
Let ~$X_1,~X_2 ~\in  \mcX(\beta)$ we have: 
\begin{align*}
\norm{ \psi(X_1)- \psi(X_2)}_{\infty} &=  \underset{i \in \{1,\cdots,|\mcS|\}}{\max} ~\abs{\left(C(\beta)X_1- C(\beta)X_2\right)_{i}}
\end{align*}
Without loss of generality we can assume that the MDP has~$m$ final states that are labeled~${|\mcS|-m+1,\cdots,|\mcS|}$.  Under this assumption we have:
\begin{align*}
\underset{i \in \{1,\cdots,|\mcS|\}}{\max} ~ \abs{(X_1- X_2)_{i}} = \underset{i \in \{1,\cdots,|\mcS|-m\}}{\max} ~\abs{(X_1- X_2)_{i}}
\end{align*}
This is because~$X_1$ and~$X_2$ have the same boundary conditions:~${\forall \, s_f \in  \{|\mcS|-m+1,\cdots,|\mcS|\}, ~ (X_1)_{s_f} = (X_2)_{s_f}}$. Since~$C(\beta)_{s_f,s_f}=1$ if~$s_f$ is the index a final state,~$C(\beta)X_1$ and~$C(\beta)X_2$ still have the same boundary conditions, we have:~${\forall  s_f \in  \{|\mcS|-m+1,\cdots,|\mcS|\}, ~ [C(\beta) X_1]_{s_f} = [C(\beta)X_2]_{s_f}}$. This gives us:

\begin{align*}
\underset{s \in \{1,\cdots,|\mcS|\}}{\max} ~ \abs{\left(C(\beta)X_1- C(\beta)X_2\right)_{s}}  =\underset{s \in \{1,\cdots,|\mcS|-m\}}{\max} ~\abs{\left( C(\beta)X_1- C(\beta)X_2\right)_{s}} 
\end{align*}
For~$s \in \left\{1,\cdots,|\mcS| \right\}$, we have:
$ \abs{\left( C(\beta)X_1- C(\beta)X_2\right)_{s}}  = \abs{\sum_{s'=1}^{|\mcS|} [C(\beta)]_{s,s'} ~ (X_1-X_2)_{s'}}$.\\\\
Since there are at most~$d$ available actions at each state and the environment is deterministic, at most ~$d$ coefficients~$C(\beta)_{s,s'}$ in this sum are non zero. Because the rewards are non positive, the non zero ones can be bounded by~$e^{\mu}$.\\\\
 Putting all these pieces together we can write:~$\abs{\sum_{s'=1}^{|\mcS|} [C(\beta)]_{s,s'} ~ (X_1-X_2)_{s'}} \leq d \times e^{\mu}  \norm{X_1-X_2}_{\infty}$.  Finally we get: 
 
 \begin{align*}
\norm{ C(\beta)X_1- C(\beta)X_2}_{\infty} \leq \underbrace{d \times e^{\mu}}_{< 1 ~\text{because } \mu < -\log d } \norm{ X_1-X_2}_{\infty}
\end{align*}
This proves that~$\psi$ is a contraction.
\end{proof}
\begin{remark}
We see here another mathematical similarity between the discount factor~$\gamma < 1$ usually used in RL and the chemical potential~$\mu <- \log d$. They both ensure that the Bellman operators are contractions.
\end{remark}

\section{Stochastic MDPs}

\subsection{Averaging the Bellman Equation and adding a likelihood cost are equivalent }\label{proof:avgBellman}

\begin{prop}
The partition function ~$\mcZ(s,\beta)$ defined by ~$\mcZ(s,\beta) := \sum_{\omega \in \Omega(s)} e^{-\beta E(\omega)+ \mu |\omega| + L(\omega) }~$ satisfies the following Bellman equation:
\begin{equation*}
\mcZ(s,\beta) = \sum_{a} \mathbb{E}_{s' \mid s,a} \left[e^{\beta \mcR(s,a,s') +\mu }~\mcZ(s',\beta)\right] 
\end{equation*}
\end{prop}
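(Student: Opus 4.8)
The plan is to reuse, essentially verbatim, the argument that established the deterministic Bellman equation in Section~\ref{sec:deterministicBellman}, with the single deterministic successor $s+a$ replaced by the random landing state $s'$ and with the extra bookkeeping term $L(\omega)$ carried along. The point that makes everything go through is that all three functionals in the exponent are \emph{additive} under the decomposition of a trajectory into its first transition and its tail: if $\omega=(s,a_0,r_0)\cdot\omega'$ with initial action $a_0$, second state $s_1$, $r_0=\mcR(s,a_0,s_1)$ (deterministic given $(s,a_0,s_1)$), and tail $\omega'\in\Omega(s_1)$, then $E(\omega)=-\mcR(s,a_0,s_1)+E(\omega')$, $|\omega|=|\omega'|+1$, and $L(\omega)=\log\mathbb{P}(s_1\mid s,a_0)+L(\omega')$.

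So, fixing a non-terminal state $s$ and noting that $\omega\mapsto(a_0,s_1,\omega')$ is a bijection between $\Omega(s)$ and the disjoint union $\bigsqcup_{a,s'}\Omega(s')$, I would substitute the additive decomposition into the defining series and group the sum by $(a_0,s_1)=(a,s')$:
\begin{align*}
\mcZ(s,\beta)=\sum_{\omega\in\Omega(s)}e^{-\beta E(\omega)+\mu|\omega|+L(\omega)}=\sum_{a}\sum_{s'}e^{\beta\mcR(s,a,s')+\mu}\,\mathbb{P}(s'\mid s,a)\sum_{\omega'\in\Omega(s')}e^{-\beta E(\omega')+\mu|\omega'|+L(\omega')}\,.
\end{align*}
Recognizing the inner sum as $\mcZ(s',\beta)$ and folding the double sum over $a,s'$ into a conditional expectation gives
\begin{align*}
\mcZ(s,\beta)=\sum_{a}\sum_{s'}\mathbb{P}(s'\mid s,a)\,e^{\beta\mcR(s,a,s')+\mu}\,\mcZ(s',\beta)=\sum_{a}\mathbb{E}_{s'\mid s,a}\!\left[e^{\beta\mcR(s,a,s')+\mu}\mcZ(s',\beta)\right]\,,
\end{align*}
which is the claimed equation. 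For a terminal state $s=s_f$ there is nothing to prove: $\Omega(s_f)$ is the single length-zero trajectory with $E=-\mcR(s_f)$, $|\omega|=0$, $L=0$, so $\mcZ(s_f,\beta)=e^{\beta\mcR(s_f)}$, which is just the boundary condition.

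The only genuine obstacle is justifying the rearrangement of what may be an infinite sum. Since every summand is non-negative, Tonelli's theorem makes both the regrouping by $(a,s')$ and the factoring of the inner sum valid as identities in $[0,+\infty]$; finiteness of $\mcZ(s,\beta)$ then follows from the very same geometric-series estimate used in Appendix~\ref{proof:wellDefined}, because $L(\omega)\le 0$ (it is a sum of logarithms of probabilities), so the likelihood term only tightens the bound and $\mu<-\log d$ still suffices. One cosmetic point worth stating explicitly: a transition with $\mathbb{P}(s'\mid s,a)=0$ produces trajectories with $L(\omega)=-\infty$ and hence weight $e^{L(\omega)}=0$, so such transitions may be kept in or dropped from $\Omega(s')$ without changing any sum; equivalently, one may simply restrict every sum to transitions of positive probability from the start.
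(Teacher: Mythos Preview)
Your proof is correct and follows essentially the same approach as the paper: decompose each trajectory into its first transition and tail, use additivity of $E$, $|\omega|$, and $L$, and regroup the sum by $(a,s')$. Your additional remarks on Tonelli, terminal states, and zero-probability transitions only make the argument more rigorous than the paper's own version.
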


\begin{proof}

The proof follows the same path as the one in Section \ref{sec:deterministicBellman}. We decompose each trajectory~$\omega \in \Omega$ into two parts: the first transition resulting from taking a first action~$a$ and the rest of the trajectory~$\omega'$. The energy, the length and the likelihood of the trajectory can be decomposed in a similar way as the sum of the contribution of the first transition and the contribution of the rest of the trajectory. We get:
\begin{align*}
\mcZ(s,\beta) &= \sum_{\omega \in \Omega(s)} e^{-\beta E(\omega)+ \mu |\omega| +L(\omega) }\\
 &= \sum_{a, s'} e^{\beta \mcR(s,a,s')+ \mu +\log(\mathbb{P}(s'\mid{s,a})}\sum_{\omega' \in \Omega(s')}  e^{-\beta E(\omega')+ \mu |\omega'| +L(\omega') }\\
 &= \sum_{a, s'} e^{\beta \mcR(s,a,s')+ \mu +\log(\mathbb{P}(s'\mid{s,a})} ~ \mcZ(s',\beta) \\
 &= \sum_{a, s'} \mathbb{P}(s'\mid{s,a})~e^{\beta \mcR(s,a,s')+ \mu } ~ \mcZ(s',\beta) \\
 &=\sum_{a} \mathbb{E}_{s' \mid s,a} \left[e^{\beta \mcR(s,a,s') +\mu }~\mcZ(s',\beta)\right]
\end{align*}
This proves the equivalence.
\end{proof}

\subsection{Deriving the Unrealistic Bellman Equation}\label{proof:unrelasticBellman}
\begin{prop}
The value function~$V(s,\beta)= \frac{\partial}{\partial \beta} \log\mcZ(s,\beta)~$ where
\begin{equation*}
{\mcZ(s,\beta) = \sum_{\omega \in \Omega(s)} e^{-\beta E(\omega)+ \mu |\omega| + L(\omega) }} 
\end{equation*}
satisfies the following Bellman equation:
\begin{equation*}
V(s,\beta)  = \sum_{a,s'} \frac{e^{\beta \mcR(s,a,s') +\mu }~\mcZ(s',\beta)}{\mcZ(s,\beta)}~ \mathbb{P}(s' \mid s,a)  \left[  \mcR(s,a,s')+V(s',\beta) \right] 
\end{equation*}
\end{prop}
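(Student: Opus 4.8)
The plan is to mirror the derivation of the value-function Bellman equation in Section~\ref{sec:deterministicPolicy}, but starting from the \emph{averaged} Bellman equation for $\mcZ$ just established in the previous proposition (Appendix~\ref{proof:avgBellman}), namely
\[
\mcZ(s,\beta) = \sum_{a,s'} \mathbb{P}(s'\mid s,a)\, e^{\beta\mcR(s,a,s')+\mu}\,\mcZ(s',\beta)\,,
\]
together with the two elementary facts $V(s,\beta) = \frac{\partial}{\partial\beta}\log\mcZ(s,\beta) = \frac{1}{\mcZ(s,\beta)}\frac{\partial}{\partial\beta}\mcZ(s,\beta)$ and, equivalently, $\frac{\partial}{\partial\beta}\mcZ(s,\beta) = \mcZ(s,\beta)\,V(s,\beta)$.

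The computation itself is then a single application of the product rule. Differentiating both sides of the Bellman equation for $\mcZ$ in $\beta$, the term $e^{\beta\mcR(s,a,s')+\mu}\mcZ(s',\beta)$ splits into one piece where the exponential is differentiated (producing a factor $\mcR(s,a,s')$) and one piece where $\mcZ(s',\beta)$ is differentiated (producing $\frac{\partial}{\partial\beta}\mcZ(s',\beta) = \mcZ(s',\beta)V(s',\beta)$). Collecting terms,
\[
\frac{\partial}{\partial\beta}\mcZ(s,\beta) = \sum_{a,s'} \mathbb{P}(s'\mid s,a)\, e^{\beta\mcR(s,a,s')+\mu}\,\mcZ(s',\beta)\,\bigl[\mcR(s,a,s') + V(s',\beta)\bigr]\,,
\]
and dividing through by $\mcZ(s,\beta)$ and using $V(s,\beta) = \frac{1}{\mcZ(s,\beta)}\frac{\partial}{\partial\beta}\mcZ(s,\beta)$ yields exactly the claimed identity. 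Terminal states are consistent with this formula via the convention $\mcZ(s_f,\beta) = e^{\beta\mcR(s_f)}$, for which $\frac{\partial}{\partial\beta}\log\mcZ(s_f,\beta) = \mcR(s_f)$.

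The only point requiring care — and hence the main obstacle — is the legitimacy of differentiating the series defining $\mcZ(s,\beta)$ term by term, since $\Omega(s)$ may be infinite. I would dispatch this with the same estimate as in Appendix~\ref{proof:wellDefined}: since $E(\omega)\ge -K$ (with $K$ bounding the terminal rewards) and $L(\omega)\le 0$, and since $x\mapsto |x|\,e^{-\beta x}$ is bounded on $[-K,\infty)$ uniformly for $\beta$ in any compact subset of $(0,\infty)$, the formal derivative series $\sum_{\omega\in\Omega(s)}\bigl(-E(\omega)\bigr)e^{-\beta E(\omega)+\mu|\omega|+L(\omega)}$ is dominated there by $C\sum_{\omega}e^{\mu|\omega|}\le C\sum_{n}(d\,e^{\mu})^{n}$, which converges whenever $\mu<-\log d$. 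This justifies both interchanging $\frac{\partial}{\partial\beta}$ with the sum and the identity $\frac{\partial}{\partial\beta}\mcZ(s',\beta)=\mcZ(s',\beta)V(s',\beta)$ used above, after which the remaining steps are the routine product-rule manipulation already sketched. Alternatively, since the state space here is finite, one may note that the averaged Bellman equation is a linear fixed-point system $\mcZ(\beta)=\tilde C(\beta)\mcZ(\beta)$ whose (boundary-constrained) solution is a smooth function of $\beta$ by the contraction property, and differentiate that finite system componentwise, avoiding the analytic subtlety entirely.
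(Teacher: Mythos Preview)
Your proposal is correct and follows essentially the same route as the paper's own proof: both start from the averaged Bellman equation for $\mcZ$ (Appendix~\ref{proof:avgBellman}), differentiate in $\beta$, apply the product rule to $e^{\beta\mcR(s,a,s')+\mu}\mcZ(s',\beta)$, and recognize $\frac{\partial}{\partial\beta}\mcZ(s',\beta)/\mcZ(s',\beta)=V(s',\beta)$ before dividing through by $\mcZ(s,\beta)$. Your additional discussion justifying the interchange of $\frac{\partial}{\partial\beta}$ with the sum is a welcome bit of rigor that the paper's proof omits.
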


\begin{proof}
From Appendix~\ref{proof:avgBellman} we known that~$\mcZ(s,\beta)$ satisfies the Bellman equation: ${\mcZ(s,\beta) = \sum_{a} \mathbb{E}_{s' \mid s,a} \left[e^{\beta \mcR(s,a,s') +\mu }~\mcZ(s',\beta)\right]}$.
\begin{align*}
V(s,\beta) &= \frac{\partial}{\partial \beta} \log\mcZ(s,\beta) \\
&= \frac{\partial}{\partial \beta} \log \left(\sum_{a} \mathbb{E}_{s' \mid s,a} \left[e^{\beta \mcR(s,a,s') +\mu }~\mcZ(s',\beta)\right] \right)\\
&= \frac{1}{\mcZ(s,\beta)}~\frac{\partial}{\partial \beta} \left(\sum_{a} \mathbb{E}_{s' \mid s,a} \left[e^{\beta \mcR(s,a,s') +\mu }~\mcZ(s',\beta)\right] \right)\\
&= \frac{1}{\mcZ(s,\beta)}~ \sum_{a} \mathbb{E}_{s' \mid s,a} \left[ \frac{\partial}{\partial \beta} \left(e^{\beta \mcR(s,a,s') +\mu }~\mcZ(s',\beta)\right)\right] \\
&= \frac{1}{\mcZ(s,\beta)}~ \sum_{a} \mathbb{E}_{s' \mid s,a} \left[ \mcR(s,a,s')~ e^{\beta \mcR(s,a,s') +\mu }~\mcZ(s',\beta) +   e^{\beta \mcR(s,a,s') +\mu }~ \frac{\partial}{\partial \beta} \mcZ(s',\beta)\right] \\
&= \frac{1}{\mcZ(s,\beta)}~ \sum_{a} \mathbb{E}_{s' \mid s,a} \left[ \left( \mcR(s,a,s')+\frac{\frac{\partial}{\partial \beta} \mcZ(s',\beta)}{\mcZ(s',\beta)} \right)~ e^{\beta \mcR(s,a,s') +\mu }~\mcZ(s',\beta) \right] \\
&= \frac{1}{\mcZ(s,\beta)}~ \sum_{a} \mathbb{E}_{s' \mid s,a} \left[ \left( \mcR(s,a,s')+\frac{\partial}{\partial \beta} \log \mcZ(s',\beta)\right)~ e^{\beta \mcR(s,a,s') +\mu }~\mcZ(s',\beta) \right] \\
&= \frac{1}{\mcZ(s,\beta)}~ \sum_{a} \mathbb{E}_{s' \mid s,a} \left[ \left( \mcR(s,a,s')+V(s',\beta)\right)~ e^{\beta \mcR(s,a,s') +\mu }~\mcZ(s',\beta) \right] \\
&= \sum_{a,s'} \frac{e^{\beta \mcR(s,a,s') +\mu }~\mcZ(s',\beta)}{\mcZ(s,\beta)}~ \mathbb{P}(s' \mid s,a)  \left[  \mcR(s,a,s')+V(s',\beta) \right] 
\end{align*}
\end{proof}

\subsection{The Bellman operator of~$\mcZ(\rho,\beta)$ is a contraction } \label{proof:rhoContraction}

\begin{prop}
Let~${\mcD = \{\alpha \in \mathbb{R}^{|\mcS|} \text{ such that } ~\forall i \in {1,\cdots,|\mcS|}, ~ 0\leq \alpha_i \leq1 \text{ and } \sum_{i=1}^{,|\mcS|} \alpha_i =1 \}}$ and~${\mcX(\beta) = \left\{X\in C^{0}\left(\mcD,\mathbb{R}\right) \text { s.t.}~ X(\rho_f) = \exp\left[\beta \sum_{i=1}^M \alpha_i \mcR({{f_i}})\right] \text{ for mixtures of final states } \rho_f = \sum_{i=1}^M \alpha_i \delta_{{f_i}} \right\} }$. The map defined by  \begin{equation*}
         \psi: \begin{cases}
              \mcX(\beta)  &\to \mcX(\beta)   \\
               X    &\to  \begin{cases}
              \mcD  &\to \mathbb{R}   \\  \rho &\to \sum_{a} e^{\beta \mcR(\rho,a) +\mu } ~X({P_{a}}^T \rho,\beta) \end{cases}\\
           \end{cases}
\end{equation*}
is a contraction for the sup-norm: ~$\norm{X}_{\infty} = \underset{\rho \in \mcD}{ \max} ~\abs{X(\rho)}$.
\end{prop}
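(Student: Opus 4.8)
The plan is to mimic the contraction argument of Appendix~\ref{proof:contraction}, adapting it from a finite-dimensional sup-norm to the sup-norm on $C^0(\mcD,\mathbb{R})$. First I would verify that $\psi$ maps $\mcX(\beta)$ into itself: the value of $\psi(X)$ at a mixture of final states $\rho_f$ must equal the prescribed boundary value. Here one uses that from a final state the only ``transition'' is the self-loop with indicator $\indicator_{s=s'=\text{final}}$ in the original construction, so $\tilde{\mcP}(\rho_f,a) = {P_a}^T\rho_f = \rho_f$ and $\tilde{\mcR}(\rho_f,a)=0$ (terminal states have no transition rewards), which forces $\psi(X)(\rho_f) = \sum_a e^{\mu}\,X(\rho_f)$; wait — that is not quite consistent, so the cleaner route is to observe that the boundary convention is baked into the definition of $\tilde{\mcR}$ and the sum over actions exactly as in Section~\ref{sec:deterministic}, and that the matrix form $C(\beta)$ has a $1$ on the diagonal for final states. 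I would therefore carry over the same bookkeeping: $\psi(X)(\rho_f)=X(\rho_f)=\exp[\beta\sum_i\alpha_i\mcR(f_i)]$, so $\psi$ is well-defined as a self-map.

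Next, the core estimate. Take $X_1,X_2\in\mcX(\beta)$ and fix $\rho\in\mcD$. I would bound
\begin{align*}
\abs{\psi(X_1)(\rho)-\psi(X_2)(\rho)} &= \abs{\sum_a e^{\beta\mcR(\rho,a)+\mu}\bigl(X_1({P_a}^T\rho)-X_2({P_a}^T\rho)\bigr)}\\
&\leq \sum_a e^{\beta\mcR(\rho,a)+\mu}\,\abs{X_1({P_a}^T\rho)-X_2({P_a}^T\rho)}\\
&\leq \sum_a e^{\mu}\,\norm{X_1-X_2}_\infty \leq d\,e^{\mu}\,\norm{X_1-X_2}_\infty,
\end{align*}
using that $\mcR(\rho,a)=\E_{s\sim\rho}\E_{s'\mid s,a}[\mcR(s,a,s')]\leq 0$ since all transition rewards are non-positive (the $\mcR_{\max}=0$ normalization), that there are at most $d$ actions, and that ${P_a}^T\rho\in\mcD$ so the difference at that point is at most the sup-norm. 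Taking the supremum over $\rho\in\mcD$ on the left gives $\norm{\psi(X_1)-\psi(X_2)}_\infty\leq d\,e^\mu\,\norm{X_1-X_2}_\infty$, and since $\mu<-\log d$ we have $d\,e^\mu<1$, so $\psi$ is a contraction.

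One subtlety worth flagging is whether $\psi(X)$ is actually continuous on $\mcD$ whenever $X$ is — i.e.\ that $\psi$ genuinely lands in $C^0(\mcD,\mathbb{R})$ rather than just in bounded functions. This follows because $\rho\mapsto{P_a}^T\rho$ is linear hence continuous, $\rho\mapsto\mcR(\rho,a)$ is linear in $\rho$ hence continuous, and finite sums and compositions of continuous maps are continuous; $\mcD$ is compact so $\norm{\cdot}_\infty$ is finite on $\mcX(\beta)$. I expect the main obstacle to be purely expository rather than mathematical: making sure the terminal-state boundary bookkeeping in the continuous ($\rho$) setting matches the convention $\mcZ(\rho_f)=\prod_i\mcZ(f_i,\beta)^{\alpha_i}$ from Eq.~\ref{eqn:rhoBoundary}, so that $\psi$ is well-defined as a self-map of $\mcX(\beta)$; once that is pinned down, the contraction estimate is essentially identical to Appendix~\ref{proof:contraction}, with $d\,e^\mu<1$ doing all the work.
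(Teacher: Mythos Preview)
Your proposal is correct and follows essentially the same argument as the paper: both bound $\abs{\psi(X_1)(\rho)-\psi(X_2)(\rho)}$ by $\sum_a e^{\beta\mcR(\rho,a)+\mu}\norm{X_1-X_2}_\infty\leq d\,e^{\mu}\norm{X_1-X_2}_\infty$ using non-positivity of rewards and the uniform bound $d$ on actions, then invoke $\mu<-\log d$. You are in fact more careful than the paper on two points it glosses over---the continuity of $\psi(X)$ and the terminal-state bookkeeping (the paper simply asserts ``$\psi$ doesn't alter boundary conditions'' and restricts the max to non-final $\rho$, which is equivalent to your handling).
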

\begin{proof}

$\mcD~$ is the standard ~$(|\mcS|-1)$-simplex in~$\mathbb{R}^{|\mcS|}$ and~$\mcX(\beta)$ be the set of continuous functions on~$\mcD~$ satisfying the right boundary conditions. The original MDP is finite, consequently it has a finite number~$M$ of final state and it is possible to find a constant~$K$ such that, for all final states~$s_f$ we have~$\mcR(s_f) \leq K$. \\\\
Let ~$X_1$,~$X_2 \in   \mcX(\beta)$.~$X_1$ and~$X_2$ have the same boundary conditions by construction. Not only that, ~$\psi(X_1)$ and~$\psi(X_2)$ have also the same boundary conditions since the map~$\psi$ doesn't alter boundary conditions. Consequently we have:
\begin{align*}
\norm{\psi(X_1)-\psi(X_2)}_{\infty} = \underset{\rho \in \mcD}{\text{max}}~ \abs{\psi(X_1)(\rho)-\psi(X_2)(\rho)} = \underset{\rho \in \mcD, ~\rho \text{ non final}}{\text{max}}~~ \abs{\psi(X_1)(\rho)-\psi(X_2)(\rho)}
\end{align*}
Finally we can write:
\begin{align*}
\norm{\psi(X_1)-\psi(X_2)}_{\infty} &= \underset{\rho \in \mcD, ~\rho \text{ non final}}{\text{max}}~~ \abs{\psi(X_1)(\rho)-\psi(X_2)(\rho)}  \\
&= \underset{\rho \in \mcD, ~\rho \text{ non final}}{\text{max}}~~ \abs{\sum_{a} e^{\beta \mcR(\rho,a) +\mu } ~\left[X_1({P_{a}}^T \rho,\beta)-X_2({P_{a}}^T \rho,\beta)\right]} \\
&\leq \underset{\rho \in \mcD, ~\rho \text{ non final}}{\text{max}}~~ \abs{\sum_{a} e^{\beta \mcR(\rho,a) +\mu }} \times \norm{X_1-X_2}_{\infty} \\
&\leq \underbrace{d \times e^{\mu}}_{<1 ~\text{because} ~\mu < - \log d}  \norm{X_1-X_2}_{\infty}
\end{align*}
Where we use the fact that all rewards are non positive and that the number of available actions is bounded by~$d$. This concludes the proof that the Bellman operator of~$\mcZ(\rho,\beta)$ is a contraction.\\\\
This proof is generalization of the proof presented in Appendix~\ref{proof:contraction} for MDPs with finite state spaces.

\end{proof}

\end{document}